\newcommand{\ourmethod}{\textsc{jam-pgm}\xspace}
\newcommand{\pmw}{\textsc{pmw}^\text{Pub}\xspace}
\newcommand{\gem}{\textsc{gem}^\text{Pub}\xspace}
\newcommand{\gempriv}{\textsc{gem}\xspace}
\newcommand{\mwem}{\textsc{mwem}\xspace}
\newcommand{\mst}{\textsc{mst}\xspace}
\newcommand{\aim}{\textsc{aim}\xspace}
\newcommand{\pgm}{\textsc{Private-pgm}\xspace}
\newcommand{\univ}{\mathcal{X}}
\newcommand{\dpriv}{D}
\newcommand{\dpub}{D_{\text{pub}}} 
\newcommand{\dsynth}{S}
\newcommand{\npriv}{n}
\newcommand{\npub}{\Hat{n}}
\newcommand{\x}{\mathbf{x}}
\newcommand{\R}{\mathbb{R}}
\newcommand{\ind}{\mathbf{1}}
\newcommand{\jointcandsetdown}{W_\downarrow^{\pub,\priv}}
\newcommand{\priv}{{\small\textsf{priv}}}
\newcommand{\pub}{{\small\textsf{pub}}}
\newcommand{\model}{p_{\theta}}
\DeclareMathOperator*{\argmin}{arg\,min}
\newcommand{\normEuc}[1]{{\left\lVert #1 \right\rVert}}
\newcommand{\Score}{{\normalfont \text{Score}}}
\newcommand{\adult}{\textsc{ADULT} }
\newcommand{\salary}{\textsc{SALARY} }
\newcommand{\fire}{\textsc{FIRE} }
\newcommand{\taxi}{\textsc{NIST-TAXI} }
\newcommand{\titanic}{\textsc{TITANIC} }
\theoremstyle{plain}
\newtheorem{theorem}{Theorem}[section]
\newtheorem{proposition}[theorem]{Proposition}
\theoremstyle{definition}
\newtheorem{definition}[theorem]{Definition}
\theoremstyle{remark}
\begin{document}

%

%
\runningauthor{ Miguel Fuentes, Brett Mullins, Ryan McKenna, Gerome Miklau, Daniel Sheldon}

\twocolumn[

\aistatstitle{Joint Selection: Adaptively Incorporating Public Information for Private Synthetic Data}

\aistatsauthor{Miguel Fuentes\\University of Massachusetts Amherst \And  Brett Mullins\\University of Massachusetts Amherst\AND  Ryan McKenna\\Google Research\And Gerome Miklau\\Tumult Labs\And Daniel Sheldon\\University of Massachusetts Amherst}
\aistatsaddress{ } ]

\begin{abstract}
  Mechanisms for generating differentially private synthetic data based on marginals and graphical models have been successful in a wide range of settings. However, one limitation of these methods is their inability to incorporate public data. Initializing a data generating model by pre-training on public data has shown to improve the quality of synthetic data, but this technique is not applicable when model structure is not determined a priori. We develop the mechanism $\ourmethod$, which expands the adaptive measurements framework to jointly select between measuring public data and private data. This technique allows for public data to be included in a graphical-model-based mechanism. We show that $\ourmethod$ is able to outperform both publicly assisted and non publicly assisted synthetic data generation mechanisms even when the public data distribution is biased.
\end{abstract}

\section{INTRODUCTION}\label{sec:intro} 
A differentially private (DP) algorithm can extract valuable insights from sensitive data while provably limiting what can be learned about individuals~\citep{dwork2006calibrating}. However, when data is accessed repeatedly, the curator must track the accumulated privacy loss and add noise sufficient to protect the entire sequence of queries, which presents logistical challenges for many common settings, including exploratory data analysis. Therefore, there is considerable interest in releasing private synthetic datasets that can support a range of downstream analyses~\citep{charest2011can,chen2015differentially,zhang2017privbayes,xie2018differentially,jordon2018pate,zhang2018differentially,asghar2019differentially,bowen2020comparative,vietri2020new,ge2020kamino,mckenna2021winning}.

Much of the recent research for private synthetic data has its roots in the multiplicative-weights exponential mechanism (\mwem) for private query-answering~\citep{hardt2010simple}. These algorithms iteratively select workload queries to measure, then measure those queries (with noise) and use the results to update a model for the synthetic data. Finally they generate synthetic records from the model. The goal is to generate synthetic tabular data to accurately answer a set of workload queries. 
This general pattern has been referred to as the ``select-measure-generate'' paradigm \citep{mckenna2021winning}. Recent work formalizes this pattern as the ``adaptive measurements'' framework \citep{liu2021iterative}.  Within this framework, research has focused on different model representations, estimation methods, selection mechanisms, and computational efficiency~\citep{aydore2021differentially,zhang2020privsyn,liu2021iterative,liu2021leveraging,cai2021data,mckenna2022aim}. 

It is well known that public data, when available, can be used to boost accuracy of many differentially private algorithms~\citep{ji2013differential,alon2019limits,bassily2020private,amid2022public,zhou2020bypassing,bassily2018model,kairouz2021nearly,wang2020differentially,papernot2016semi}.
For example, public data can be used to pre-train models~\citep{liu2021leveraging,yu2021differentially}, select hyperparameters or model structure~\citep{mckenna2021winning}, or even answer some queries directly to save privacy budget. Public data may come from releases that were done before DP restrictions were instituted or they may even come in the form of previously released synthetic data. In general, if the public data and private data are ``similar enough'', performance gains can be very large. However, a significant issue is that one does not know in advance how similar the public and private data are. Some previous works assume public data and private data are from the same distribution, which is unrealistic \citep{bassily2020private, alon2019limits}.

We address the problem of differentially private query answering and synthetic data generation with the assistance of public data. Unlike prior work, which uses public data for pre-training or to determine the support of the data distribution~\citep{liu2021leveraging,liu2021iterative}, we integrate public data tightly into the selection process of the mechanism, which means we explicitly consider \emph{when} and \emph{how} to use public measurements as a proxy for private measurements. We focus primarily on workloads of marginals, and augment the selection step to allow measuring a marginal either from public or private data. Conceptually, measuring a private marginal is unbiased but requires privacy noise, while measuring a public marginal as an estimate of a private marginal is biased but noise-free: which of these is better depends on the data, workload, and privacy parameters. Therefore, the mechanism must (privately) decide which marginals to measure from public data and which marginals to measure from the private data. 

In this paper, we develop joint adaptive measurements with $\pgm$ ($\ourmethod$), the first approach that incorporates public data \emph{selection} into iterative methods for synthetic data.
$\ourmethod$ is an adaptive measurements approach that uses \pgm \citep{mckenna2019graphical} to model the data distribution. 
$\ourmethod$ privately selects from both public and private measurements with scores that estimate the error reduction expected from each measurement.
By automatically selecting which queries to answer with public data, $\ourmethod$ can benefit from public data that is accurate for some marginals but inaccurate for others. We show empirically that $\ourmethod$ can use public data to increase accuracy across a range of scenarios.
\section{BACKGROUND}\label{sec:background}
A private dataset $\dpriv$ is a collection of $\npriv$ records each containing potentially sensitive information about one individual. Each record $r = (r_1, ..., r_m)$ has $m$ attributes and each attribute $r_i$ takes a value from the discrete finite set $\univ_i$. Each record belongs to the data universe $\univ = \univ_1 \times \dots \times \univ_m$. We also consider a public dataset $\dpub$ that is a collection of $\npub$ records which are not subject to differential privacy constraints. We assume that $\dpub \in \univ^{\npub}$. 

\subsection{Differential Privacy}\label{sec:dp_background}
Differential privacy is a formal model of privacy that bounds the effect of any individual record on the output of a randomized algorithm. We say that datasets $D, D' \in \univ^{\npriv}$ are neighboring, denoted $D \sim D'$, if $D'$ can be obtained from $D$ by modifying the values of at most one record. Note that all differentially private mechanisms considered in this paper use this notion of the neighboring relation.

\begin{definition}[Differential privacy; DP] \label{defn:dp}
    A randomized mechanism $\mathcal{M} \colon \univ^{\npriv} \rightarrow \mathcal{R}$ is said to be $(\epsilon, \delta)$-DP if, for all neighboring datasets $D \sim D' \in \univ^{\npriv}$ and all measurable subsets $ S \subseteq \mathcal{R} $, we have $\Pr[\mathcal{M}(D) \in S] \leq e^\epsilon \cdot \Pr[\mathcal{M}(D') \in S] + \delta$.
\end{definition}

A useful alternative notion of differential privacy for analyzing the composition of mechanisms is zero-Concentrated Differential Privacy. 
\begin{definition}[Zero-concentrated differential privacy; zCDP] \label{defn:zcdp}
  A mechanism $\mathcal{M}$ satisfies $\rho$-zCDP if for any neighboring datasets $D \sim D'$ and for all $\gamma \in (1, \infty)$, it holds that $D_{\gamma}(\mathcal{M}(D) || \mathcal{M}(D')) \leq \rho \gamma$,
  where $D_{\gamma}$ is the $\gamma$-Renyi divergence between distributions $\mathcal{M}(D),\mathcal{M}(D')$.
\end{definition}

\begin{proposition}[zCDP to DP Conversion; \citealt{canonne2020discrete}] \label{prop:zcdp_conversion}
    If mechanism $\mathcal{M}$ satisfies $\rho$-zCDP, then, it satisfies $(\epsilon, \delta)$-DP for any $\epsilon > 0$ and $\delta = \min_{\alpha > 1} \frac{\exp((\alpha - 1)(\alpha\rho-\epsilon))}{\alpha - 1}\left(1 - \frac{1}{\alpha}\right)^\alpha$.
\end{proposition}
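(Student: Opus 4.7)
The plan is to reduce the statement to a tail bound on the likelihood ratio between $P = \mathcal{M}(D)$ and $Q = \mathcal{M}(D')$ for arbitrary neighboring $D \sim D'$, and then to exploit the zCDP assumption, which controls every Rényi divergence $D_\alpha(P \,\|\, Q) \leq \alpha\rho$ for $\alpha > 1$. The whole argument is therefore a Rényi-to-$(\epsilon,\delta)$ conversion, specialized to the zCDP bound.

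First I would carry out the standard reduction: to verify $(\epsilon,\delta)$-DP, it is enough to show $P(S) - e^\epsilon Q(S) \leq \delta$ uniformly in measurable $S$, and the supremum of the left-hand side is attained on $S^\star = \{p > e^\epsilon q\}$, giving $\delta \leq \mathbb{E}_Q[(Z - e^\epsilon)_+]$ where $Z = p/q$. Next I would bring in the zCDP hypothesis in the form $\mathbb{E}_Q[Z^\alpha] = \exp\bigl((\alpha-1) D_\alpha(P\,\|\,Q)\bigr) \leq \exp\bigl((\alpha-1)\alpha\rho\bigr)$. The core estimate is then a Markov-type tail bound: since $Q(Z > t) \leq \mathbb{E}_Q[Z^\alpha]/t^\alpha$, combining this with the layer-cake identity
\[
\mathbb{E}_Q[(Z - e^\epsilon)_+] = \int_0^\infty Q(Z > e^\epsilon + s)\, ds
\]
and a sharp choice of splitting point (or equivalently a carefully tuned Hölder inequality) produces the factor $(1-1/\alpha)^\alpha/(\alpha-1)$ multiplying $\exp\bigl((\alpha-1)(\alpha\rho - \epsilon)\bigr)$. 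Finally, since the zCDP hypothesis gives a valid bound for every $\alpha > 1$, the sharpest bound is obtained by minimizing the resulting expression over $\alpha$, which yields the stated formula.

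I expect the main obstacle to be extracting the exact prefactor $(1-1/\alpha)^\alpha/(\alpha-1)$ rather than a looser constant. A direct Markov argument on $Z^\alpha$ inside the layer-cake integral easily gives $\exp\bigl((\alpha-1)(\alpha\rho-\epsilon)\bigr)/(\alpha-1)$, which is enough for a qualitative conversion, but the sharper $(1-1/\alpha)^\alpha$ factor requires a two-parameter optimization: introducing a free threshold $t \geq e^\epsilon$, bounding the integral by $(t - e^\epsilon)\,Q(Z > e^\epsilon) + \int_t^\infty Q(Z > s)\,ds$ using Markov on $Z^\alpha$ for the second piece, and then choosing $t$ (and ultimately $\alpha$) optimally. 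The algebra is routine once set up correctly, but one has to be careful with the convex combination that produces the $(1 - 1/\alpha)^\alpha$ term; this is where I would spend the most care.
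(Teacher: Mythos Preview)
The paper does not prove this proposition; it is stated with attribution to Canonne, Kamath, and Steinke and used as a black box, so there is no in-paper argument to compare against.

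Your outline is nonetheless the standard route and matches the proof in the cited reference: reduce $(\epsilon,\delta)$-DP to the hockey-stick divergence $\mathbb{E}_Q[(Z-e^\epsilon)_+]$ with $Z=p/q$, and control it through the moment bound $\mathbb{E}_Q[Z^\alpha]\le e^{(\alpha-1)\alpha\rho}$ supplied by zCDP, then optimize over $\alpha>1$. One correction regarding the sharp prefactor: the integral-splitting maneuver you describe does not recover $(1-1/\alpha)^\alpha$. If you bound $\int_{e^\epsilon}^{t}Q(Z>s)\,ds\le (t-e^\epsilon)\,Q(Z>e^\epsilon)$ and then control $Q(Z>e^\epsilon)$ by Markov on $Z^\alpha$, the optimal split is $t=e^\epsilon$, which collapses back to the weak bound $e^{(\alpha-1)(\alpha\rho-\epsilon)}/(\alpha-1)$. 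The clean device is a single pointwise inequality: for all $z\ge 0$, $c>0$, $\alpha>1$,
\[
(z-c)_+ \;\le\; \frac{(\alpha-1)^{\alpha-1}}{\alpha^{\alpha}\,c^{\alpha-1}}\,z^{\alpha},
\]
tight at $z=\alpha c/(\alpha-1)$ (check by minimizing the difference). Taking $c=e^\epsilon$, integrating against $Q$, and inserting the moment bound gives $\delta \le \frac{(\alpha-1)^{\alpha-1}}{\alpha^\alpha}\,e^{(\alpha-1)(\alpha\rho-\epsilon)}$, and since $\frac{(\alpha-1)^{\alpha-1}}{\alpha^\alpha}=\frac{1}{\alpha-1}\bigl(1-\tfrac{1}{\alpha}\bigr)^\alpha$ this is exactly the stated expression before the minimization. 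That replaces your two-parameter optimization with a one-line Young-type bound.
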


The synthetic data mechanisms considered in this paper utilize two building block mechanisms: the exponential mechanism for private selection and the Gaussian mechanism for private query measurement. To analyze the privacy of these mechanisms, an important quantity is sensitivity, the maximum change in function value on neighboring datasets. The $L_p$ sensitivity of a function $f$ is given by $\Delta_p(f) = \max_{D \sim D'}  \normEuc{f(D) - f(D')}_p$ where $f\colon \univ^{\npriv} \rightarrow \mathbb{R}^k$. 

\begin{proposition}[zCDP of Gaussian mechanism; \citealt{bun2016concentrated}]\label{defn:gaussian mech}
  Let $f: \mathcal{X}^n \rightarrow \mathbb{R}^k$ be a vector-valued function of the dataset. For dataset $D$, the Gaussian mechanism adds i.i.d.\ Gaussian noise to $f(D)$ with scale parameter $\sigma^2$ i.e., $\mathcal{M}(D) = f(D) + \sigma \Delta_2(f) \mathcal{N}(0, \mathbf{1})$,
  where $\mathbf{I}$ is the $k \times k$ identity matrix. The Gaussian Mechanism satisfies $\frac{1}{2\sigma^2}$-zCDP.
\end{proposition}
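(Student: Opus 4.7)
The plan is to compute the $\gamma$-Rényi divergence between $\mathcal{M}(D)$ and $\mathcal{M}(D')$ for arbitrary neighboring $D \sim D'$ and show it is at most $\gamma/(2\sigma^2)$, which matches $\rho\gamma$ with $\rho = 1/(2\sigma^2)$ in Definition \ref{defn:zcdp}. First I would observe that adding independent $\sigma \Delta_2(f) \mathcal{N}(0,1)$ noise to each coordinate of $f(D)$ means $\mathcal{M}(D) \sim \mathcal{N}\!\bigl(f(D),\, \sigma^2 \Delta_2(f)^2 I\bigr)$ and likewise $\mathcal{M}(D') \sim \mathcal{N}\!\bigl(f(D'),\, \sigma^2 \Delta_2(f)^2 I\bigr)$, so the problem reduces to Rényi divergence between two $k$-dimensional Gaussians with identical spherical covariance but different means.

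Next I would derive (or cite) the standard closed-form expression
\begin{equation*}
D_\gamma\bigl(\mathcal{N}(\mu_1,\Sigma)\,\|\,\mathcal{N}(\mu_2,\Sigma)\bigr) \;=\; \frac{\gamma}{2}\,(\mu_1-\mu_2)^\top \Sigma^{-1}(\mu_1-\mu_2).
\end{equation*}
This can be obtained directly: writing the $\gamma$-moment $\int p^{\gamma}q^{1-\gamma}$ of the two Gaussian densities, completing the square in the exponent yields another (unnormalized) Gaussian integral that evaluates in closed form; applying $\tfrac{1}{\gamma-1}\log(\cdot)$ to the normalizing-constant ratio collapses to the displayed quadratic form.

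Finally I would substitute $\mu_1 - \mu_2 = f(D) - f(D')$ and $\Sigma^{-1} = \tfrac{1}{\sigma^2\Delta_2(f)^2}I$, which gives
\begin{equation*}
D_\gamma\bigl(\mathcal{M}(D)\,\|\,\mathcal{M}(D')\bigr) \;=\; \frac{\gamma\,\|f(D)-f(D')\|_2^2}{2\,\sigma^2\,\Delta_2(f)^2} \;\le\; \frac{\gamma}{2\sigma^2},
\end{equation*}
where the inequality uses the definition $\Delta_2(f) = \max_{D\sim D'}\|f(D)-f(D')\|_2$. Since this holds uniformly in $\gamma \in (1,\infty)$ and in the choice of neighbors, the mechanism satisfies $\tfrac{1}{2\sigma^2}$-zCDP. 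The only mildly technical step is the Gaussian Rényi computation; the sensitivity bound and the identification of $\rho$ are then immediate.
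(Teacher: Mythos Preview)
Your argument is correct and is essentially the standard proof of this fact (the Rényi divergence between two spherical Gaussians with common covariance has the closed form you wrote, and plugging in the sensitivity bound immediately yields $\rho = 1/(2\sigma^2)$). Note, however, that the paper does \emph{not} supply its own proof of this proposition: it is stated as a known result and attributed to \citet{bun2016concentrated}, so there is no in-paper argument to compare against. Your derivation matches the original source's approach.
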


\begin{proposition}[zCDP of exponential mechanism; \citealt{cesar2021bounding}]\label{defn:exponential mech}
  Let $\epsilon > 0$ and $\Score: \mathcal{R} \times \univ^n \rightarrow \mathbb{R}$ be a function such that $\Score(r, D)$ is the quality score of candidate 
  $r \in \mathcal{R}$ for data set $D$. The exponential mechanism~\citep{mcsherry2007mechanism} outputs a candidate $r \in \mathcal{R}$ according to the following distribution:
  $\Pr[\mathcal{M}(D) = r] \propto \exp \big(\frac{\epsilon}{2 \Delta_1} \Score(r, D) \big)$,
  where $\Delta_1 = \sup_{r \in \mathcal{R}} \Delta_1(\Score(r, D))$.
  The exponential mechanism satisfies $\frac{\epsilon^2}{8}$-zCDP.
\end{proposition}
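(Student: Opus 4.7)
The plan is to establish the tighter $\rho = \epsilon^2/8$ bound via the ``bounded range'' property of the exponential mechanism (strictly stronger than pure $\epsilon$-DP), which improves on the generic conversion that would give only $\epsilon^2/2$-zCDP. First I would write the privacy loss explicitly: for neighboring $D \sim D'$ and any candidate $r$,
\begin{equation*}
L(r) := \log\frac{\Pr[\mathcal{M}(D)=r]}{\Pr[\mathcal{M}(D')=r]} = \frac{\epsilon}{2\Delta_1}\bigl(\Score(r,D)-\Score(r,D')\bigr) - \log\frac{Z(D)}{Z(D')},
\end{equation*}
where $Z(D)$ is the normalizer. The second term is constant in $r$, and the first term lies in $[-\epsilon/2,\epsilon/2]$ by the definition of $\Delta_1$. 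Hence, as $r$ varies, $L(r)$ takes values in an interval of width at most $\epsilon$. This is the bounded-range property.

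Next I would combine this range bound with the normalization identity $\mathbb{E}_{r\sim\mathcal{M}(D')}[e^{L(r)}]=1$. The key technical lemma to prove is: if a random variable $X$ takes values in an interval of width $\epsilon$ and satisfies $\mathbb{E}[e^X]=1$, then
\begin{equation*}
\mathbb{E}[e^{\gamma X}] \;\le\; \exp\!\bigl(\tfrac{\gamma(\gamma-1)\epsilon^2}{8}\bigr) \quad \text{for all } \gamma > 1.
\end{equation*}
Assuming this lemma, substituting $X = L(r)$ with $r \sim \mathcal{M}(D')$ into $D_\gamma(\mathcal{M}(D)\,\|\,\mathcal{M}(D')) = \tfrac{1}{\gamma-1}\log\mathbb{E}\bigl[e^{\gamma L(r)}\bigr]$ yields $D_\gamma \le \gamma\epsilon^2/8$, which is exactly the desired $\tfrac{\epsilon^2}{8}$-zCDP guarantee.

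The main obstacle is proving the lemma with the correct constant. A naive application of Hoeffding's inequality to the centered variable $L - \mathbb{E}[L]$, together with $\mathbb{E}[L] \le 0$ (from Jensen applied to $\mathbb{E}[e^L]=1$), only gives $\mathbb{E}[e^{\gamma L}] \le e^{\gamma^2 \epsilon^2/8}$, which after division by $\gamma-1$ blows up as $\gamma \to 1$ and fails to yield the right constant uniformly in $\gamma$. To obtain the factor $\gamma(\gamma-1)$ rather than $\gamma^2$ in the exponent, I would use a worst-case reduction: by convexity, the extremal $X$ satisfying the constraints is supported on the two endpoints of the interval, so the problem reduces to a one-parameter optimization over the mass $p$ at one endpoint (with the other endpoint's mass determined by $\mathbb{E}[e^X]=1$). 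A direct calculation then certifies the bound $\gamma(\gamma-1)\epsilon^2/8$, with the worst case attained in the symmetric limit; this is essentially the argument of \citet{cesar2021bounding}. Once the lemma is in hand, the remaining steps are routine bookkeeping.
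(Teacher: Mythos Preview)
The paper does not supply its own proof of this proposition; it is stated with attribution to Cesar and Rogers and then used as a black box in the privacy accounting for Algorithm~\ref{alg:our_method}. Your sketch is essentially the argument of that reference --- the bounded-range observation on the privacy loss, the normalization identity $\mathbb{E}[e^{L}]=1$, and the two-point extremal reduction that yields the $\gamma(\gamma-1)$ factor --- and it is correct. There is nothing further to compare against here: your outline recapitulates the cited source, which is all the paper relies on.
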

Later, we suppress the dependence on $D$ and write $\Score(r)$ when it is clear from context.

Our method adaptively selects which building block mechanisms to use and how much privacy budget to allocate at each round based on the output of previous rounds. Because of this, we use the following result for fully adaptive composition instead of more basic results for non-adaptive composition.
\begin{proposition}[Fully adaptive composition for zCDP; \citealt{whitehouse2022fully}] \label{prop:composition}
  Let $(\mathcal{M}_i)_{i \geq 1}^{\ell}$ be a sequence of adaptively chosen mechanisms and $(\rho_i)_{i = 1}^{\ell}$ be a sequence of adaptively chosen privacy parameters such that $\mathcal{M}_i$ satisfies $\rho_i$-zCDP for $1 \leq i \leq \ell$. Let $\mathcal{M}_{1:\ell}$ denote the mechanism releasing output $(\mathcal{M}_1, \ldots, \mathcal{M}_{\ell})$. If it is always the case that $\Sigma_{i = 1}^{\ell} \rho_i \leq \rho$ then the mechanism $\mathcal{M}_{1:\ell}$ satisfies $\rho$-zCDP.
\end{proposition}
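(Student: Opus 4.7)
The plan is to prove this by constructing a non-negative supermartingale that encodes the accumulated privacy loss against the accumulated budget, then invoking the R\'enyi-divergence definition of zCDP. Fix $\gamma > 1$ and neighboring datasets $D \sim D'$, and let $P$ and $Q$ denote the joint laws of $(\mathcal{M}_1,\ldots,\mathcal{M}_\ell)$ on $D$ and $D'$ respectively. Writing $z_{<i} = (z_1,\ldots,z_{i-1})$, I would factor the density ratio round-by-round as $\frac{dP}{dQ}(z_{1:\ell}) = \prod_{i=1}^{\ell} L_i(z_i \mid z_{<i})$, where $L_i(\cdot\mid z_{<i})$ is the Radon--Nikodym derivative of the conditional law of $\mathcal{M}_i$ on $D$ with respect to that on $D'$ given the history. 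Crucially, both the $i$th mechanism and its parameter $\rho_i$ are chosen adaptively from $z_{<i}$, so both are measurable with respect to the natural filtration on outputs.

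Next, I would define the process
\[
    M_t = \prod_{i \leq t} L_i(z_i \mid z_{<i})^{\gamma} \cdot \exp\!\Bigl(-(\gamma-1)\gamma \textstyle\sum_{i \leq t} \rho_i\Bigr), \qquad M_0 = 1,
\]
and verify that $(M_t)$ is a non-negative supermartingale under $Q$. Conditioning on $z_{<t}$ freezes both $\rho_t$ and $M_{t-1}$, so $\mathbb{E}_Q[M_t \mid z_{<t}] = M_{t-1} \cdot e^{-(\gamma-1)\gamma\rho_t} \cdot \mathbb{E}_{Q_t\mid z_{<t}}[L_t^{\gamma}]$. The identity $\mathbb{E}_{Q_t\mid z_{<t}}[L_t^{\gamma}] = \exp\bigl((\gamma-1)\, D_\gamma(P_t(\cdot\mid z_{<t}) \,\|\, Q_t(\cdot\mid z_{<t}))\bigr)$ combined with the per-round $\rho_t$-zCDP guarantee for $\mathcal{M}_t$ yields $\mathbb{E}_{Q_t\mid z_{<t}}[L_t^{\gamma}] \leq e^{(\gamma-1)\gamma\rho_t}$, which exactly cancels the deterministic exponential factor and gives $\mathbb{E}_Q[M_t\mid z_{<t}] \leq M_{t-1}$.

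Finally, the supermartingale property implies $\mathbb{E}_Q[M_\ell] \leq 1$, which unwinds to $\mathbb{E}_Q\!\bigl[(dP/dQ)^{\gamma} \cdot \exp(-(\gamma-1)\gamma \sum_i \rho_i)\bigr] \leq 1$. Using the pathwise hypothesis $\sum_i \rho_i \leq \rho$ to lower-bound the exponential factor pointwise by $e^{-(\gamma-1)\gamma\rho}$ gives $\mathbb{E}_Q[(dP/dQ)^{\gamma}] \leq e^{(\gamma-1)\gamma\rho}$, so $D_\gamma(P \,\|\, Q) \leq \gamma\rho$ for all $\gamma > 1$, establishing $\rho$-zCDP of $\mathcal{M}_{1:\ell}$ by Definition~\ref{defn:zcdp}.

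The main obstacle, and the whole reason fully adaptive composition requires a separate treatment, is that the parameters $\rho_i$ are themselves random variables measurable with respect to the filtration on prior outputs, so a naive tower-property telescoping does not immediately close: each conditional bound $\exp((\gamma-1)\gamma\rho_t)$ still depends on earlier outputs and cannot be pulled outside the outer expectations. The supermartingale construction sidesteps this by absorbing the running budget into the process itself, and the pathwise (as opposed to in-expectation) constraint on $\sum_i \rho_i$ is precisely what allows converting the random total budget into the deterministic bound $\rho$ in the final step; weakening this to an expected-budget constraint would require a more delicate privacy-filter argument.
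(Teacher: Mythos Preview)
The paper does not prove this proposition; it is stated in the background section with a citation to \citet{whitehouse2022fully} and used as a black box in the privacy analysis of \cref{alg:our_method}. Your supermartingale argument is correct and is precisely the technique developed in that reference: folding the random running budget $\sum_{i\le t}\rho_i$ into the process $M_t$ so that the per-round zCDP bound cancels the exponential penalty conditionally, and then invoking the deterministic pathwise cap $\sum_i \rho_i \le \rho$ only at the very end to pass from $\mathbb{E}_Q[M_\ell]\le 1$ to the uniform R\'enyi bound. Your closing paragraph also correctly identifies why the naive chain-rule argument fails and why the almost-sure budget constraint (rather than an expected-budget one) is essential here.
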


\subsection{Marginals and Workloads}
A marginal is a collection of linear queries that captures low-dimensional structure of the data distribution. Given a subset of attributes $\tau \subseteq \{1, \ldots, m\}$, the marginal on $\tau$ is a histogram over the possible values the attributes in $\tau$ can take. 
\begin{definition}\label{defn:marginal}
    Let $\tau \subseteq \{1, \ldots, m\}$ be a subset of attributes, $\univ_\tau = \prod_{i \in \tau} \univ_i$, and $n_\tau = |\univ_\tau|$. Define $r_\tau = (r_i)_{i \in \tau}$, the restriction of record $r$ to $\tau$. The marginal on $\tau$ is a vector of counts $\x \in \mathbb{R}^{n_\tau}$ indexed by $t \in \univ_\tau$ such that $\x[t] = \sum_{r \in D} \ind[r_\tau = t]$. We denote the function that computes the marginal on $\tau$ as $q_\tau$.
\end{definition}

For a marginal query $q_\tau$, the $L_1$ sensitivity is $2$ and the $L_2$ sensitivity is $\sqrt{2}$. To verify this, observe that neighboring datasets differ on the values of attributes $\tau$ for at most one record, increasing a count in the histogram by one and decreasing another by one. The sensitively of measuring one of the linear queries contained in the marginal is the same as the sensitivity of measuring the entire marginal. This useful property is sometimes called ``the marginal trick" and it makes marginals an efficient class of measurements.

We define a workload $W$ as a set of linear queries. In this paper, we focus on the class of workloads consisting of marginal queries. Many synthetic data generation algorithms take a workload as an input so that the distribution of the output data can be tailored to the given workload. A workload can be general, such as the set of all marginal queries for three or fewer attributes, or it can be specific, where it may be designed with particular dataset or application in mind.

The goal of synthetic data generation is to create a mechanism that will minimize error for any given workload and any input dataset. We define a notion of error for a given workload.
\begin{definition}
\label{defn:workload_err}
   Let $W$ be a workload of marginals. The workload error of synthetic dataset $\dsynth$ on $W$ is defined as follows for a fixed private data set $\dpriv$:
   \begin{equation}\label{eq:workload_err}
       \text{Error}_W(\dsynth) = \frac{1}{\npriv|W|} \sum_{\tau \in W} \normEuc{q_\tau(\dpriv) - q_\tau(\dsynth)}_1
   \end{equation}
   We write $\text{Error}_\tau(S)$ if $W$ contains a single marginal $\tau$.
\end{definition}

\subsection{\pgm}\label{sec:priv_pgm_background}
\pgm \citep{mckenna2019graphical} is a general purpose and scalable approach to combining noisy measurements into a single representation of the data distribution from which records can be sampled. Mechanisms using \pgm such as \mst and \aim are among the state-of-the-art methods for differentially private synthetic data generation \citep{mckenna2021winning, mckenna2022aim}. Given some marginal queries $\tau_1, \dots , \tau_t$ and noisy query answers $y_1, \dots, y_t$ \pgm produces an estimate of the data distribution $\model$ where $\theta$ are the parameters of a probabilistic graphical model. In this paper, we will take for granted that $\model$ can be used to answer marginal queries $q_\tau(\model)$. \pgm solves an optimization problem to search the space of models $\theta\in\mathcal{P}$ for one that minimizes the loss function $\sum_{i=1}^t ||y_i - q_{\tau_i}(\model)||_2^2$.

Since \pgm represents the data distribution as a graphical model, it is capable of scaling effectively to high-dimensional settings. However, as noted in prior work \citep{mckenna2019graphical,mckenna2021winning,mckenna2021relaxed,mckenna2022aim,cai2021data} the complexity of \pgm depends crucially on the set of marginals that have been measured.  \pgm exposes a utility method Is-Tractable($\tau_1, \dots, \tau_t$) that determines if \pgm is capable of efficiently handling a given set of marginals.  Efficiency-aware mechanisms that use \pgm must utilize this function in order to prevent the mechanism from measuring marginals that \pgm cannot efficiently handle~\citep{cai2021data,mckenna2022aim}.
\section{JOINT ADAPTIVE MEASUREMENTS}\label{sec:JAM}
Given a private data set $\dpriv$, a public data set $\dpub$, a workload $W$, and a privacy budget $(\epsilon, \delta)$, our goal is to design a mechanism $\mathcal{M}$ that generates synthetic data $\dsynth$ to minimize $\text{Error}_{W}(S)$ while satisfying $(\epsilon, \delta)$-DP. To solve this task we follow a design pattern called ``adaptive measurements'' \citep{liu2021iterative} which can be applied to most private synthetic data algorithms. A general version of this algorithmic pattern is provided in \cref{alg:AM}. We augment this framework by extending the selection step and measurement step to include the public data, we also presenting a novel budgeting strategy we call ``frugal budgeting". We refer to this augmented framework as ``Joint Adaptive Measurements'' (JAM).

\begin{algorithm}[tb]
   \caption{Adaptive Measurements; \cite{liu2021iterative}}
   \label{alg:AM}
\begin{algorithmic}
   \STATE {\bfseries Input:} Private dataset $\dpriv$, Workload $W$, zCDP privacy budget $\rho$
   \STATE {\bfseries Output:} Synthetic dataset $\dsynth$
   \STATE Initialize model $p_{\theta_0}$ \\
   \FOR{$t=0$ {\bfseries to} $T - 1$}
        \STATE {\bfseries select} $\tau_t$ where model $p_{\theta_t}$ poorly approximates $\dpriv$.
        \STATE {\bfseries measure} let $y_i$ be a private measurement of the marginal $\tau_t$ made by a noise-addition mechanism.
        \STATE {\bfseries update} $p_{\theta_{t+1}}$ from noisy measured information.
            \begin{equation*}
                p_{\theta_{t+1}} \gets \argmin_{\model \in \mathcal{P}} L(\model; y_1, \dots, y_t)
            \end{equation*}
    \ENDFOR
   \STATE {\bfseries generate} synthetic data $\dsynth$ from $p_{\theta_{T}}$ (or some function of the iterates $p_{\theta_{0}}, ..., p_{\theta_{T}}$)
\end{algorithmic}
\end{algorithm}

\subsection{Public Proxy Estimator}
When trying to estimate the value of a marginal $\tau$ on the private data, we can use the public data as a proxy for the private data. To do this, we evaluate the marginal query $q_\tau$ on $\dpub$ and re-scale the result by $\frac{\npriv}{\npub}$ to account for the number of records in the data sets. 

The public proxy estimator does not depend on the private data at all, so it can be used without expending any privacy budget. From a privacy standpoint, the public proxy is ideal. As a statistical estimator, the public proxy is unusual because it is deterministic and biased. In contrast, measurements made with the Gaussian mechanism incur a privacy cost but they provide an unbiased estimator that has a predictable error profile based on the noise scale. 

Which estimator is expected to incur more error depends on the similarity between the public and private marginals, the dimensionality of the marginal vector, and the noise scale being used for the Gaussian mechanism. By carefully designing a score function that considers these factors, we use the exponential mechanism to select the best estimator for the situation.

\subsection{Public/Private Measurement}\label{sec:measurement}
For conciseness, we use one measurement function to capture the Gaussian mechanism and the public proxy estimator.
\begin{definition}
    \label{defn:measure}
    For public data $\dpub$ and private data $\dpriv$, the function $\text{Measure} : 2^{[m]}\times\{\priv, \pub\} \rightarrow \R^{n_q}$ takes a marginal query $q_\tau$, public/private indicator $i$, and noise parameter $\sigma^2$ and is given by
    \begin{equation}\label{eq:measurement_fn}
        \text{Measure}(\tau,i; \sigma^2) = \begin{cases}
            q_\tau(\dpriv)\! +\!\mathcal{N}(0, \sigma^2 \mathbf{I})\!&\text{$i$ is $\priv$} \\
            q_\tau(\dpub)\frac{\npriv}{\npub} &\text{$i$ is $\pub$}. \\
        \end{cases}
    \end{equation}
\end{definition}

\subsection{Joint Candidate Set}\label{sec:expanded_candidates}
Adaptive measurement algorithms use the exponential mechanism to select from a pool of candidate queries. To allow for the possibility of measuring a marginal query with either the public proxy estimator or the Gaussian mechanism, we include public and private versions of queries in our candidate set. We refer to this as a ``joint candidate set''.
\begin{definition}
    \label{defn:joint_cand_set}
    Given a set of candidate queries $W$, the joint extension $W^{\priv, \pub}$ of $W$ is $W \times \{\priv, \pub\}$.
\end{definition}
The indicator element $i \in \{\priv, \pub\}$ indicates whether a candidate corresponds to a measurement on the private data with the Gaussian mechanism or a measurement using the public proxy.

In \cref{alg:our_method} we utilize the joint extension of the downward closure candidate set. This allows lower-dimensional marginals to be selected and was first used in $\aim$ \citep{mckenna2022aim}.
\begin{definition}
    \label{defn:priv_cand_set_down}
    The joint downward closure candidate set for $W$ is $\jointcandsetdown = \{\tau' |\tau' \subseteq \tau, \tau \in W \} \times \{\priv, \pub\}$.
\end{definition}
To select from this set, we construct a score function that applies to both public and private candidates. In practice, we filter this set of candidates $\jointcandsetdown$ down to a set $C$ containing marginals that can be added to \pgm without rendering it intractable.

\subsection{Expected Improvement Score Function}\label{sec:expected_improv}
$\ourmethod$ uses a score function that quantifies the expected improvement in the model after making a measurement. This score function has the goal of simultaneously considering which queries are being poorly approximated by the model and which marginals could be accurately measured (on the public or private data). This idea has been explored in the private data setting by evaluating the expected error of the Gaussian mechanism \citep{mckenna2022aim}. When measuring a private marginal with the Gaussian mechanism, the expected error is given by $\sqrt{2/\pi}\sigma n_\tau$ for a given noise scale $\sigma$ and marginal size $n_\tau$.  When measuring a public marginal, the error is fixed and can be evaluated directly. Combining these gives the following function for measurement error:
\begin{definition}
    \label{defn:measurement_error}
    For fixed public data $\dpub$ and private data $\dpriv$, the predicted measurement error $\text{PredError} : 2^{[m]}\times\{\priv, \pub\} \rightarrow \R$ of a 
    marginal query $\tau$ and measurement indicator $i$ with noise scale $\sigma^2$ is
    \begin{equation}\label{eq:measurement_error}
        \text{PredError}(\tau,i; \sigma^2)\!=\!\begin{cases}
            \sqrt{2/\pi}\sigma n_\tau &i = \priv \\
            \normEuc{q_\tau(\dpriv)\!-\!q_\tau(\dpub)\frac{\npriv}{\npub}}_1\!&i=\pub \\
        \end{cases}
    \end{equation}
\end{definition}
Note that for each round of the algorithm, $\dpub$, $\dpriv$, and $\sigma^2$ will be fixed so the measurement error is given as a function of just $\tau$ and $i$. To estimate the improvement in the model after making a measurement, we assume that the model will match the value of the measurement on the marginal $q_\tau(\model) = \text{Measure}(\tau, i, \sigma^2)$. Under this assumption, the expected error in the next round of the algorithm would be $\text{PredError}(\tau,i; \sigma^2)$. So, the difference between the current error of the model and our estimate for the error after a measurement gives us the expected improvement score function:
\begin{definition}
    \label{defn:score_fn}
    For a fixed model $\model$, the expected improvement score function $\text{Score} : 2^{[m]}\times\{\priv, \pub\} \rightarrow \R$ of a marginal query $\tau$ and public/private indicator $i$ with noise parameter $\sigma^2$ is
    \begin{equation}\label{eq:score_fn}
        \text{Score}(\tau, i; \sigma^2) = \text{Error}_{\tau}(\model) - \text{PredError}(\tau,i; \sigma^2)
    \end{equation}
\end{definition}
The $L_1$ sensitivity of this score function is 4 because changing one record can change the value of $\text{Error}_{\model}(\tau)$ by 2 and the value of $\text{PredError}(\tau,i; \sigma^2)$ by 2.

\subsection{Frugal Budgeting}\label{sec:frugal_budgeting}
Besides accuracy, another advantage to measuring the public data is that it allows for what we call frugal budgeting. Each round, we allocate some portion of the remaining privacy budget to selection and some portion of the budget to measurement. If we select a public measurement, the budget allocated for measurement is unused. This allows us to take those budget savings and roll them over into the next round. Passing savings on to later rounds of the algorithm means that when a public measurement is selected, subsequent rounds will have higher budgets and lower noise scales.

\begin{algorithm}[tb]
   \caption{$\ourmethod$}
   \label{alg:our_method}
\begin{algorithmic}
\renewcommand{\baselinestretch}{1.2}\selectfont
   \STATE {\bfseries Input:} Public dataset $\dpub$, Private dataset $\dpriv$, Workload $W$
   \STATE {\bfseries Output:} Synthetic dataset $\dsynth$
   \STATE {\bfseries Hyperparameters:} Privacy parameter $\rho$, number of rounds $T$, select-measure split $\alpha$ \\[2pt]
   \STATE Initialize $\model \gets \text{Uniform}[\univ]$ \\[2pt]
   \FOR{$t=0$ {\bfseries to} $T - 1$}
        \STATE $\rho^t \gets \big(\rho - \sum_{s=0}^{t-1}\rho^s_\text{used}\big)/\left(T - t\right)$ \\[2pt]
        \STATE $\rho^t_\text{select}, \rho^t_\text{measure} \gets (1 - \alpha)\rho^t, \alpha\rho^t$
        \STATE $\sigma^2_t \gets 1 / \rho^t_\text{measure}$
        \STATE $C_t \gets \{\tau, i \in \jointcandsetdown | \text{Is-Tractable}(\tau, \tau_1, ..., \tau_{t - 1})\}$ \\[4pt]
        \STATE {\bfseries select} $\tau_t, i_t$ from $C_t$ using exponential mechanism with budget $\rho^t_\text{select}$ and $\text{Score}(\tau, i; \sigma_t^2)$ from \cref{eq:score_fn}\\[3pt]
        \renewcommand{\baselinestretch}{0.8}\selectfont
        \STATE {\bfseries measure} $\tau_t$ publicly or privately with measurement function from \cref{eq:measurement_fn}
            \begin{equation*}
                y_t = \text{Measure}(\tau_t,i_t; \sigma^2_t)
            \end{equation*}
        \STATE {\bfseries estimate} the data distribution using \pgm
            \begin{equation*}
                \model \gets \argmin_{\model' \in \mathcal{P}} \sum_{i = 1}^t \normEuc{y_i - q_{\tau_i}(\model')}_2^2 
            \end{equation*}
        \STATE $\rho_\text{used}^t \gets \rho_{\text{select}}^t + \mathbf{1}[i_t = \priv]\cdot\rho_{\text{measure}}^t$
    \ENDFOR
   \STATE {\bfseries generate} synthetic data $\dsynth$ from $\model$
\end{algorithmic}
\end{algorithm}

\subsection{Privacy Proof}\label{sec:privacy_proof_our_method}
The privacy analysis of \cref{alg:our_method} is an application of \crefrange{defn:gaussian mech}{prop:composition}.

\begin{theorem}
  For any number of rounds $T > 0$, budget split parameter $\alpha \in (0, 1)$, and privacy parameter $\rho > 0$, $\ourmethod$ satisfies $\rho$-zCDP. 
\end{theorem}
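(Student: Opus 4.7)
The plan is to verify that each mechanism invoked inside the loop satisfies the claimed per-round zCDP guarantee, and then invoke fully adaptive composition (\cref{prop:composition}) to conclude. The adaptive version of composition is essential here because, under frugal budgeting, the parameters $\rho^t$, $\rho^t_\text{select}$, $\rho^t_\text{measure}$ used in round $t$ depend on the random outputs of rounds $0, \ldots, t-1$ (specifically, on whether $i_s = \priv$ was selected), so standard non-adaptive composition does not directly apply.

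First I would fix an arbitrary round $t$ and view the step as a sequential composition of two sub-mechanisms whose privacy parameters are determined from the history. For the \textbf{select} step, the score function $\text{Score}(\tau, i; \sigma_t^2)$ has $L_1$ sensitivity $4$ as noted in the text, so calling the exponential mechanism with budget $\rho^t_\text{select}$ (i.e., with the $\epsilon$ that satisfies $\epsilon^2/8 = \rho^t_\text{select}$) gives $\rho^t_\text{select}$-zCDP by \cref{defn:exponential mech}. For the \textbf{measure} step, there are two cases: if $i_t = \pub$, the output is a deterministic function of $\dpub$ alone, so it is $0$-zCDP with respect to $\dpriv$; if $i_t = \priv$, the Gaussian noise is $\mathcal{N}(0, \sigma_t^2 \mathbf{I})$ with $\sigma_t^2 = 1/\rho^t_\text{measure}$, applied to a marginal query of $L_2$ sensitivity $\sqrt{2}$, so by \cref{defn:gaussian mech} it satisfies $\tfrac{\Delta_2^2}{2\sigma_t^2} = \rho^t_\text{measure}$-zCDP. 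Combining these via the adaptive composition inside a single round shows that the $t$-th iteration satisfies $\rho_\text{used}^t$-zCDP, where $\rho_\text{used}^t = \rho^t_\text{select} + \mathbf{1}[i_t = \priv]\rho^t_\text{measure}$, exactly the quantity tracked by the algorithm.

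Next I would verify the budget bookkeeping: $\sum_{t=0}^{T-1} \rho_\text{used}^t \leq \rho$ almost surely. Let $R_t = \rho - \sum_{s<t} \rho^s_\text{used}$ be the remaining budget before round $t$, so $R_0 = \rho$ and $\rho^t = R_t/(T-t)$. Since $\rho^t_\text{select} + \rho^t_\text{measure} = \rho^t$, we have $\rho_\text{used}^t \leq \rho^t = R_t/(T-t) \leq R_t$, from which a short induction gives $R_t \geq 0$ for all $t \leq T$, hence $\sum_t \rho_\text{used}^t = \rho - R_T \leq \rho$.

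Finally I would apply \cref{prop:composition} to the full sequence of $2T$ (or, treating each round as one mechanism, $T$) adaptively chosen mechanisms with adaptively chosen privacy parameters summing to at most $\rho$, yielding that $\ourmethod$ as a whole satisfies $\rho$-zCDP. The only subtle point—and the one I expect to need explicit care—is emphasizing that fully adaptive composition is needed rather than the standard version, since both the parameters $\rho^t$ and the decision of whether to spend the measurement budget at all are functions of prior outputs. Everything else is an accounting exercise over building blocks already recorded in \crefrange{defn:gaussian mech}{prop:composition}.
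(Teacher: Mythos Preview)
Your proposal is correct and follows essentially the same approach as the paper's own proof: establish that each round is $\rho_\text{used}^t$-zCDP via the exponential and Gaussian mechanism guarantees, verify the invariant $\sum_{s\le t}\rho_\text{used}^s \le \rho$ from the budgeting rule, and conclude by fully adaptive composition (\cref{prop:composition}). Your version is simply more explicit about the Gaussian mechanism arithmetic and the induction for the budget invariant, and you rightly call out the need for \emph{adaptive} composition because of frugal budgeting.
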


\begin{proof}
  By construction, each round of $\ourmethod$ satisfies $\rho_{\text{used}}^t$-zCDP, where $\rho_\text{used}^t = \rho_{\text{select}}^t + \ind[i_t = \priv]\cdot\rho_{\text{measure}}^t$ as defined in the algorithm: the selection step always satisfies $\rho_{\text{select}}^t$-zCDP, and the measurement step satisfies $\rho_{\text{measure}}^t$-zCDP if a private candidate is selected and $0$-zCDP if a public candidate is selected.
  Also by construction, $\rho_\text{used}^t < \rho^t = \nicefrac{(\rho - \sum_{s=0}^{t-1}\rho_\text{used}^s)}{(T-t)}$, so we have the invariant that $\sum_{s=0}^t \rho_{\text{used}}^s \leq \rho$. 
  Therefore, by \cref{prop:composition}, $\ourmethod$ satisfies $\rho$-zCDP.
\end{proof}
\section{PRIOR WORK}\label{sec:prior}
The field of DP synthetic data, also known as DP query release, has a rich history. The adaptive measurements framework provides useful language to describe a number of methods in a unified framework. Many of these algorithms draw inspiration from the \mwem algorithm \citep{hardt2010simple}, the first to iteratively refine a data model by selecting poorly approximated queries in each round. Since \mwem, various data models, selection criteria, and optimization procedures have been explored in the literature \citep{hardt2010simple, liu2021iterative, gaboardi2014dual, aydore2021differentially, mckenna2021winning, cai2021data, mckenna2022aim}. 

The first method to make use of public data for synthetic data generation was $\pmw$ \citep{liu2021leveraging}. This method is a version of the $\mwem$ algorithm that incorporates the public data in two ways: initialization and domain restriction. The original $\mwem$ algorithm represents the data distribution as a histogram where each entry corresponds to an element of the data universe $\univ$. It then uses an adaptive measurement strategy along with a multiplicative weights update rule to update that data distribution.  One problem with this algorithm is that the size of the histogram representation scales exponentially with the dimensionality of the data. To get around this, $\pmw$ restricts the histogram to elements of the data universe that are present in the public data. It also initializes that histogram to match the distribution of the public data. When the public and private data distributions are similar, this greatly improves performance. But when the distributions are very different, the domain restriction can make it impossible to generate good synthetic data.

This domain restriction problem was addressed in subsequent work, which introduced the $\gem$ method \citep{liu2021iterative}. This method represents the data distribution with a generator network and does not restrict the domain of the distribution. Instead, $\gem$ pre-trains on the public data to initialize the model. By doing this, $\gem$ realizes gains in performance without overly committing to public data, which may do a poor job of reflecting the private data distribution.

These works incorporate the public data into the initialization step of the adaptive measurements framework. This is fundamentally incompatible with methods that do not fix model structure a priori, such at \pgm. In a graphical model, the number and structure of parameters depend on the edges in the graph. In $\pgm$, edges are determined by the choice of marginal measurements so the model structure is not yet defined during the initialization step. In contrast, the joint selection framework incorporates the public data into the select and measure steps of the adaptive measurements framework.
\section{EXPERIMENTS}\label{sec:experiments}
In this section, we evaluate the performance of $\ourmethod$ against the baseline methods $\pmw$ \citep{liu2021leveraging} and $\gem$ \citep{liu2021iterative} which both incorporate public data. We also compare to \aim~\citep{mckenna2022aim} and $\gempriv$ \citep{liu2021iterative} which do not use public data.
Here we provide the key details of the experimental setup. Additional details on the compute environment and code are provided in the supplementary materials. Code for running our methods is provided on GitHub\footnote{\url{https://github.com/Miguel-Fuentes/JAM_AiStats/}}.

\begin{figure*}
    \centering
    \includegraphics[width=\textwidth]{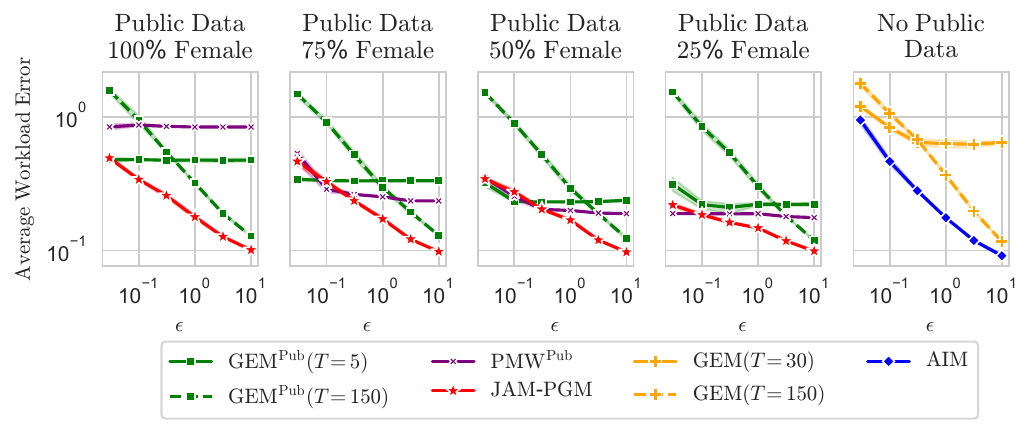}
    \captionof{figure}{Average workload error (workload of all 3 way marginals) for $\epsilon$ in \{0.03, 0.10, 0.31, 1.00, 3.16, 10.00\} and $\delta=1\times10^{-9}$ for the \adult data set. Private dataset consists of 25\% female records while public dataset consists of  100\%, 75\%, 50\%, and 25\% female records respectively. The plots are shown with the amount of public data bias decreasing from left to right.}
    \label{fig:adult}
\end{figure*}

\paragraph{Task Specification}
The task is to minimize workload error while satisfying $(\epsilon, \delta)$-DP. We evaluate on the privacy parameters $\epsilon \in \{0.03, 0.10, 0.31, 1.00, 3.16, 10.00\}$ and $\delta=1\times10^{-9}$. The workload provided was all 3-way marginals and 5 trials were run with 5 random seeds to provide standard error estimates.

\paragraph{Data}
Our first experiment is performed on the \adult dataset \citep{kohavi1996scaling}. We split the dataset into public and private datasets using a stratified sampling strategy to ensure that the public data distribution is different from the private data distribution. We sampled a private dataset with 32,384 records such that 25\% of those records are female. Then, we constructed various public datasets with 3,238 records each varying the percentage of female records. 

We also experiment with the following datasets: \salary \citep{hay2016principled}, \fire \citep{ridgeway2021challenge}, \taxi \citep{taxi}, and \titanic \citep{titanic}. We split the data by randomly selecting private records and iteratively adding them to the public dataset until the public data met an error target when used as a proxy for the private data. The \titanic dataset contains about one thousand records total while the other datasets contain hundreds of thousands of records. Because of this, the behaviour of the various methods on $\titanic$ differ from the behaviour on the other datasets and the results are discussed separately. Information regarding the number of attributes, number of private data records, number of public data records, average 3-way marginal size, and total domain size of the dataset used in our experiments can be seen in appendix \cref{tab:dataset_info}.

\paragraph{Hyperparameter Selection}
The main hyperparameter for each method is the number of rounds $T$ (except for $\aim$ which determines $T$ adaptively). For all methods, we conducted limited preliminary experiments to non-privately select a value or values for this parameter. For $\pmw$ and $\ourmethod$ it was relatively easy to find a single value that performed well across a range of epsilon values. However, this was not possible for $\gem$, which performed better with fewer rounds at low privacy budgets, and more rounds at high privacy budgets. 

We interpret this behavior as implicitly selecting ``how much'' to use the public data. With low epsilon (high privacy noise), it is beneficial to run for few rounds and remain close to the public data initialization, while with high epsilon (low privacy noise) it is beneficial to run for more rounds. To fairly represent the range of performance possible with $\gem$, we selected two values for $T$, the highest and lowest ones that performed reasonably in preliminary experiments. More details of hyperparameter selection appear in the appendix.

\subsection{Varying Public Data Bias}\label{sec:adult_results}
The results of the \adult experiment can be seen in \cref{fig:adult}. For clarity, the methods that do not use public data are shown on a separate subplot.

Across all levels of public data bias and almost all values of $\epsilon$, $\ourmethod$ achieved lower average workload error than the baseline methods. All of the methods that utilize the public data perform better when the public data distribution is more similar to the private data distribution. However, as the public data bias increases, the performance of $\ourmethod$ degrades more slowly than the other public data methods. Another change related to the public data bias is the relationship between the error curves of $\gem$ with $T=5$ and $\gem$ with $T=150$. When the public data are from the same distribution as the private data, it is beneficial to run $\gem$ for fewer rounds for a given value of $\epsilon$ but as the bias increases it becomes beneficial to run $\gem$ for more rounds for a given value of $\epsilon$.

In the setting where the public data consists of only female records, we see that $\pmw$ has much worse performance than in the settings where the public data includes male records. This highlights the risk described in prior work \citep{liu2021leveraging, liu2021iterative}: if the public and private data are not sufficiently compatible, $\pmw$ will be unable to produce a good model of the private data regardless of privacy budget.

Note that while the performance of $\ourmethod$ and $\aim$ are similar for high values of $\epsilon$, the average workload error of $\aim$ is slightly lower than that of $\ourmethod$ for $\epsilon = 10$ across all levels of public data bias. 

\subsection{Small Data Set}\label{sec:titanic}
The results of the \titanic experiment can be seen in \cref{fig:titanic}. In this setting, where the private data set consists of only one thousand records, there is a large separation between the methods that utilize public data and those that do not up to $\epsilon = 3.16$. $\pmw$ performs best for $\epsilon \leq 0.31$, but it does not improve as the privacy budget increases. For larger values of $\epsilon$, $\ourmethod$ and $\gem$ have very similar average workload error.

Notice that for most values of $\epsilon$ in this experiment, the number of rounds that performs best for the $\gem$ algorithm is 1. Increasing the number of rounds to 5 significantly decreases performance for $\epsilon < 3.16$. This suggests that in this setting, the model initialization is much more informative than the noisy measurements. In cases like these with few records and low privacy budget, it may be advantageous to not make any measurements at all and simply rely on the public data instead.

\begin{figure}
    \centering
    \includegraphics[width=\columnwidth]{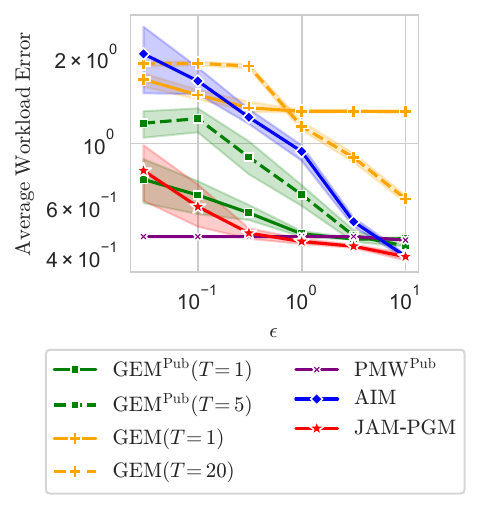}
    \caption{Average workload error (workload of all 3-way marginals) for $\epsilon$ in \{0.03, 0.10, 0.31, 1.00, 3.16, 10.00\} and $\delta=1\times10^{-9}$ for \titanic data set.}
    \label{fig:titanic}
\end{figure}

\begin{figure*}
    \centering
    \includegraphics[width=\textwidth]{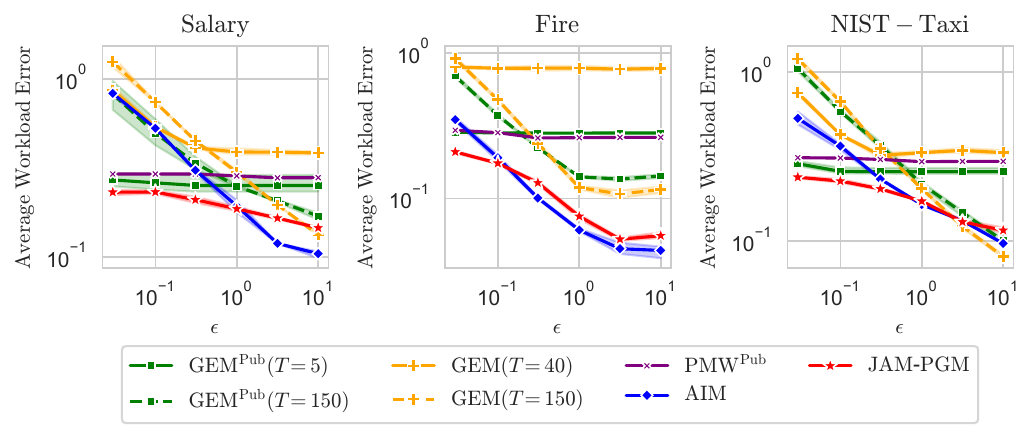}
    \caption{Average workload error (workload of all 3-way marginals) for $\epsilon$ in \{0.03, 0.10, 0.31, 1.00, 3.16, 10.00\} and $\delta=1\times10^{-9}$ for the \salary, \fire, and \taxi datasets.}
    \label{fig:multiple}
\end{figure*}

\subsection{Larger Data Sets}\label{sec:fire_salary_taxi}
The results of the \salary, \fire, and \taxi experiments can be seen in \cref{fig:multiple}. In these experiments, $\ourmethod$ outperforms the other public-data utilizing mechanisms for all values of epsilon (except for $\epsilon = 10$ on the \taxi dataset). However, none of the methods that incorporate public data are optimal in these settings. $\aim$ performs best in the \salary experiment for $\epsilon > 1$ and in the \fire experiment for $\epsilon > 0.1$ while $\gempriv$ performs best in the \taxi experiment for $\epsilon \geq 3.16$. We might hope that having access to additional information would never hurt the performance of these synthetic data mechanisms but these experiments show that this is not the case.

\section{DISCUSSION}\label{sec:limitations_and_discussion}
Our development of the ``joint adaptive measurements'' framework incorporates public data into the {\em selection} and {\em measurement} steps of the ``adaptive measurements'' framework. This expands the design space for public-data assisted DP algorithms; using this technique, we develop $\ourmethod$, which is able to use joint selection and \pgm in order to effectively select a combination of public measurements and private measurements and utilize them to generate high quality DP synthetic data.

\paragraph{Limitations}
One limitation of the joint selection framework is the lack of public data error estimates, a benefit of noise-addition mechanisms is that the distribution of noise is known and can be released publicly. \pgm \citep{mckenna2019graphical} uses this information to weight the measurements provided in its optimization objective based on the noise scale.  \aim also uses this information to perform budget annealing which dynamically determines the number of rounds \citep{mckenna2022aim}. The error of a public measurement is sensitive information and budget would need to be spent to measure it. Because of this, $\ourmethod$ gives equal weight to all measurements in the \pgm loss function and uses a fixed number of rounds instead of a budget annealing strategy. Another limitation, inherited from \pgm, is that $\ourmethod$ can not be applied to attributes with continuous values without discretization this limitation is common in this area and also applies to the other methods we compare against.

\paragraph{Suboptimality of Publicly-Assisted Methods:}
Ideally, utilizing public data would strictly improve performance compared to not utilizing public data. Our experiments show that no public-data assisted mechanism currently achieves this. On all of the data sets we tested except for \titanic, there were some values of $\epsilon$ for which none of the methods that incorporate public data outperform the baselines that do not use public data. This indicates that all of the techniques that have been developed to incorporate public data (domain restriction, pre-training, and joint selection) have some risk of performing worse than baseline methods that ignore public data. On the other hand, the public techniques clearly provide sizeable boosts in performance when the distribution of the public data closely matches the distribution or when $\epsilon$ is small. Navigating this trade-off in practice may be difficult for practitioners.

\paragraph{Benefits of $\ourmethod$}
Across all our experiments, $\ourmethod$ achieved lower average workload error than the public data baselines for most values of $\epsilon$. In addition, the adaptive nature of $\ourmethod$ alleviates some of the difficulties associated with the use of public data in practice. Users will not know a priori exactly how similar their public data and private data distributions are, so they may be hesitant to use $\pmw$ because of the risks associated with domain restriction. Similarly but to a lesser extent, selecting the number of rounds for $\gem$ may be challenging because there is no setting for this parameter that works well across values of $\epsilon$ and for a given $\epsilon$ the optimal number of rounds depends on the data set. $\ourmethod$ is a safe choice for DP synthetic data generation. Unlike $\pmw$, it does not have the risk associated with restricting the domain of the model based on public data. Additionally, there exist settings for the number of rounds parameter $T$ that work well across a range of privacy budgets which may make it easier to use in practice than $\gem$. 

\paragraph{Bounded vs Unbounded DP}
We adopt bounded DP, defining dataset neighbors as those differing by swapping a single record. Under this regime, the number of records in the private dataset is considered public information. This is useful when scaling public marginals to match the number of records in the private dataset. In contrast, unbounded DP considers neighbors based on adding or removing a single record, necessitating private estimation of the number of private records. While estimating this number is straightforward, for simplicity, we opt for bounded DP, omitting the need for a separate estimation step.

\section*{Acknowledgements}
We would like to thank Cecilia Ferrando and Javier Burroni for their helpful comments on earlier drafts of this paper. This material is based upon work supported by the National Science Foundation under Grant No. 1749854.

\bibliography{synth}
\newpage
\section*{Checklist}
 \begin{enumerate}
 \item For all models and algorithms presented, check if you include:
 \begin{enumerate}
   \item A clear description of the mathematical setting, assumptions, algorithm, and/or model. [Yes]
   \item An analysis of the properties and complexity (time, space, sample size) of any algorithm. [Yes]
   \item (Optional) Anonymized source code, with specification of all dependencies, including external libraries. [Yes]
 \end{enumerate}
 
 \item For any theoretical claim, check if you include:
 \begin{enumerate}
   \item Statements of the full set of assumptions of all theoretical results. [Yes]
   \item Complete proofs of all theoretical results. [Yes]
   \item Clear explanations of any assumptions. [Yes]     
 \end{enumerate}

 \item For all figures and tables that present empirical results, check if you include:
 \begin{enumerate}
   \item The code, data, and instructions needed to reproduce the main experimental results (either in the supplemental material or as a URL). [Yes]
   \item All the training details (e.g., data splits, hyperparameters, how they were chosen). [Yes]
         \item A clear definition of the specific measure or statistics and error bars (e.g., with respect to the random seed after running experiments multiple times). [Yes]
         \item A description of the computing infrastructure used. (e.g., type of GPUs, internal cluster, or cloud provider). [Yes]
 \end{enumerate}

 \item If you are using existing assets (e.g., code, data, models) or curating/releasing new assets, check if you include:
 \begin{enumerate}
   \item Citations of the creator If your work uses existing assets. [Yes]
   \item The license information of the assets, if applicable. [Yes]
   \item New assets either in the supplemental material or as a URL, if applicable. [Yes]
   \item Information about consent from data providers/curators. [Yes]
   \item Discussion of sensible content if applicable, e.g., personally identifiable information or offensive content. [Not Applicable]
 \end{enumerate}

 \item If you used crowdsourcing or conducted research with human subjects, check if you include:
 \begin{enumerate}
   \item The full text of instructions given to participants and screenshots. [Not Applicable]
   \item Descriptions of potential participant risks, with links to Institutional Review Board (IRB) approvals if applicable. Not Applicable]
   \item The estimated hourly wage paid to participants and the total amount spent on participant compensation. [Not Applicable]
 \end{enumerate}

 \end{enumerate}

%

%

\onecolumn
\setcounter{section}{0}
\aistatstitle{Appendix}

\section{Data}
Here we describe the preprocessing and public/private splitting strategies applied to the \adult, \fire, \salary, \taxi, and \titanic data sets. We also provide a table showing additional information about those data sets.

\subsection{Prepossessing}
In order to be consistent with prior work, we follow the preprocessing steps described in \citep{mckenna2022aim}. The first step is attribute selection, again following the lead of \cite{mckenna2022aim}, we keep all the attributes from the \adult, \salary, and \titanic datasets but we remove the 15 attributes relating to incident times in the \fire data set because after discretization, they contain redundant information. Next, we identify the domain of each data set. Normally a data provider would make this information public separately from the records in the data set, but this was not the case with these data sets. Therefore, we determine the domain by looking at the records in the data set. For each categorical attribute, we list the set of observed values and treat that as the set of possible values for that attribute. For each numerical attribute, we record the minimum and maximum observed value for that attribute. Finally, we discretize the continuous attributes by 32 equal-width bins, using the min/max values determined in the prior step.

\subsection{Public/Private Split}
Here, we describe how we split the data into a private data sets and public data sets. The public data that was generated by variable stratification is intentionally biased with respect to the private data, because of this, the public proxy estimator error will not approach 0 as the number of public records increases. To generate public data without variable stratification we sampled very small public data sets so that the sampling error would be significant enough that the algorithms would still need to access the private data.

\subsubsection{Variable Stratification}
We performed variable stratification on the\adult dataset based on the sex attribute \citep{kohavi1996scaling}. To do this, we split male records from the female records, then we sampled a private dataset with 32,384 records such that 25\% of those records are female. The next step was to sample four public datasets with 3,238 records each such that 100\%, 75\%, 50\%, and 25\% records were female. To generate a public data set with $p$\% female records, we would randomly sample $0.01 * p * 3,238$ of the remaining female records and $0.01 * (1-p) * 3,238$ of the remaining male records.

\subsubsection{Public Error Targeting}
To split the \salary, \fire, \taxi, and \titanic data sets, we set a target for public proxy error. Public proxy error is given by $\frac{1}{|W|}\sum_{\tau\in W}||q_\tau(\dpriv) - \frac{\npriv}{\npub} q_\tau(\dpub)||_1$. To create a split with the desired public proxy error, we started with one record in public data set. Then, we evaluated the public proxy error, if it was greater than the public error target we would double the size of the public data set by removing records from the private data set. We repeated this process until the public data met the error target. This doubling process gave a rough estimate for the number of public records but would give public error that was too far below the target. To resolve this, we would round down the number or records to the nearest hundred or thousand until we got closer to the error target. The original error target for all four data sets was 0.3. This target was reached for the \salary, \fire, and \taxi data sets. The \titanic data set is smaller than the others so we could not find a split that achieved the $0.3$ target; because of this, we changed the target to $0.5$.

\subsubsection{Additional Dataset Information}
\begin{table*}
    \centering
    \begin{tabular}{|c|c|c|c|c|c|}
        \hline
        Dataset Name & Columns & $\npriv$ & $\npub$ & Avg 3-way Marginal Size & Total Domain Size\\
        \hline
        \adult & 15 & 32,384 & 3,238 & $5.82\times 10^3$ & $4.09\times 10^{16}$\\
        \hline
        \titanic & 9 & 1,004 & 300 & $2.07\times 10^3$ & $8.92\times 10^{7}$\\
        \hline
        \salary & 9 & 131,727 & 4,000 & $1.64\times 10^5$ & $1.34\times 10^{13}$\\
        \hline
        \fire & 15 & 304,249 & 870 & $3.50\times 10^3$ & $4.21\times 10^{15}$\\
        \hline
        \taxi & 10 & 223,551 & 2,500 & $2.76\times 10^4$ & $1.87\times 10^{13}$\\
        \hline
    \end{tabular}
    \caption{Additional information for the datasets used in our experiments: Number of columns, number of private data records, nuber of public data records, average 3-way marginal size, and total domain size}
    \label{tab:dataset_info}
\end{table*}

\section{Additional Experimental Details}
\paragraph{Code:}
To run \aim and use \pgm in the context of \ourmethod, we used the code provided by the authors at \href{https://github.com/ryan112358/private-pgm}{https://github.com/ryan112358/private-pgm}. The \aim code assumed unbounded DP, whereas $\ourmethod$ assumes bounded DP. In order to compare fairly, we modified the sensitivity values in the \aim code to use bounded DP sensitivities. The code to run $\ourmethod$ and the version of $\aim$ with bounded DP will be available publicly at the time of publication. To run $\gem$, we use the code provided by the authors at \href{https://github.com/terranceliu/dp-query-release}{https://github.com/terranceliu/dp-query-release}. To run $\pmw$, we use the code provided by the authors at \href{https://github.com/terranceliu/pmw-pub}{https://github.com/terranceliu/pmw-pub}. 

\paragraph{Compute Environment:}
All experiments were run on internal compute clusters. The $\gem$ code is compatible with GPU acceleration, so it was run with on various types of NVIDIA GPUs, mostly (GeForce GTX TITAN X GPUs).

\paragraph{Size Limit:} The \pgm based methods require setting a size limit. We used a size limit of 80MB for both \aim and $\ourmethod$ across all experiments. When applying a size limit to a \pgm based method.  Our experiments ran with an adaptive size limit; during each round $t$ a size limit $s^t$ is determined based on the amount of privacy budget used so far $s^t = s_\text{total} (\frac{\rho_\text{used}^{t-1}}{\rho})$. This adaptive size limit ensures that the model grows slowly over the course of the rounds, which also speeds up inference in the early rounds because the model is guaranteed to be smaller.

\section{Hyperparameters}
The effect of changing the number of rounds hyperparameter $T$ for $\ourmethod$, $\gem$, $\gempriv$, and $\pmw$ are shown in visually in Figures 1-10. $\ourmethod$ has one other hyperparameter that was not searched over and that was $\alpha=0.8$. $\gem$ and $\gempriv$ have several other hyperparameters that were not searched over in \citep{liu2021iterative} so we followed their lead and kept those constant. Those hyperparameters were hidden layer sizes of $(512, 1024, 1024)$, learning rate $0.0001$, $B=1000$, and $\alpha=0.67$.

\subsection{$\gem$ and $\gempriv$ Round Sensitivity}
Across many of our experiments, we see that the version of $\gempriv$ that does not incorporate public data does have a relationship between the privacy budget and the optimal number of rounds. However, running for a relatively high number of rounds tends to lead to relatively good performance across all privacy budgets. The same is not true for $\gem$, once public data is incorporated the gap between high numbers of rounds and low numbers of rounds widens when the privacy budget is small. This is especially true when the public data are very representative of the private data.

\subsubsection{\adult Data Set}
The sensitivity in performance of $\gem$ with respect to the number of rounds hyperparameter on the \adult datasets are given in \cref{fig:adult_hyper_gempub}. The sensitivity in performance of $\gempriv$ with respect to the number of rounds hyperparameter on the private \adult data is given in \cref{fig:adult_hyper_gem}.

\begin{figure}[H]
    \centering
    \includegraphics[scale=1]{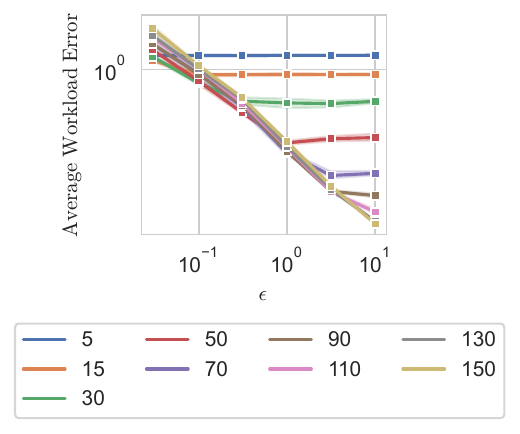}
    \caption{Average workload error for $\gempriv$ on the private \adult data set. The colors indicate the setting of the rounds hyperparameter $T$.}
    \label{fig:adult_hyper_gem}
\end{figure}

\begin{figure}[H]
    \centering
    \includegraphics[scale=1]{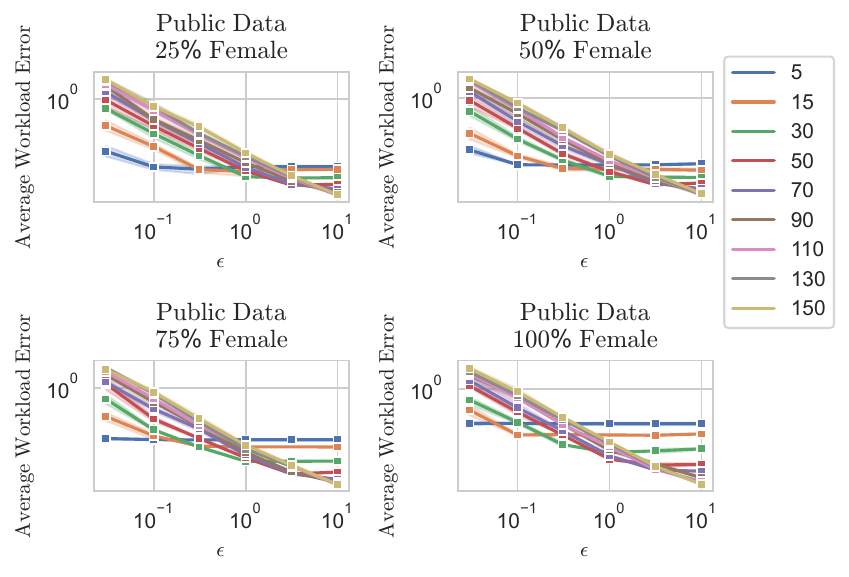}
    \caption{Average workload error for $\gem$ on the \adult data sets. The colors indicate the setting of the rounds hyperparameter $T$.}
    \label{fig:adult_hyper_gempub}
\end{figure}

\subsubsection{\titanic Data Set}
The sensitivity in performance of $\gem$ and $\gempriv$ with respect to the number of rounds hyperparameter on the \titanic data set are given in \cref{fig:titanic_hyper_gempub}.
\begin{figure}[H]
    \centering
    \includegraphics[scale=1]{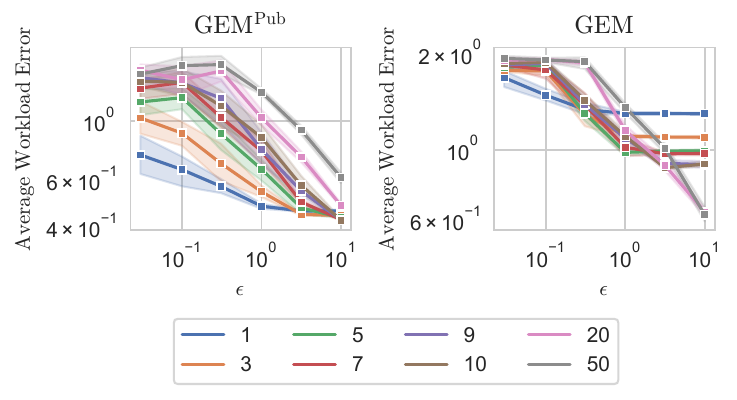}
    \caption{Average workload error for $\gem$ on the \titanic data set. The colors indicate the setting of the rounds hyperparameter $T$.}
    \label{fig:titanic_hyper_gempub}
\end{figure}

\subsubsection{Large Data Sets}
The sensitivity in performance of $\gem$ and $\gempriv$ with respect to the number of rounds hyperparameter on the \salary, \fire, and \taxi data sets are given in \cref{fig:multiple_hyper_gempub}.
\begin{figure}[H]
    \centering
    \includegraphics[scale=1]{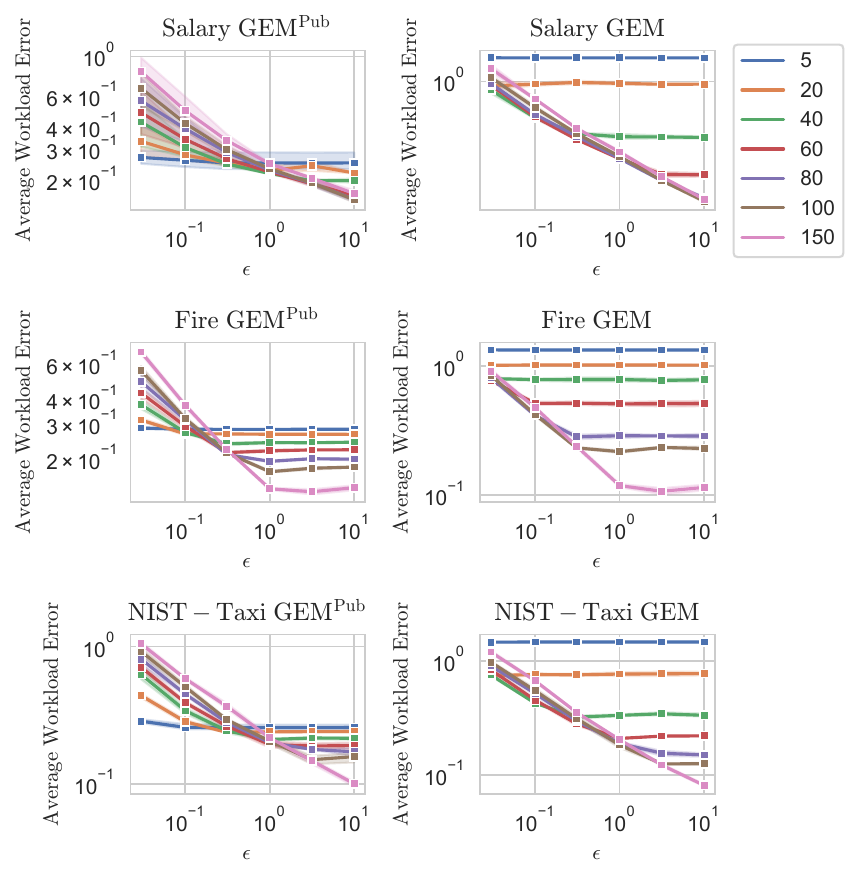}
    \caption{Average workload error for $\gem$ on the \salary, \fire, and \taxi data sets. The colors indicate the setting of the rounds hyperparameter $T$.}
    \label{fig:multiple_hyper_gempub}
\end{figure}

\subsection{$\pmw$ Round Sensitivity}
The average workload error of $\pmw$ is very insensitive to the number of rounds, notice that the y axes of the figures in this section have a very small range. 

\subsubsection{\adult Data Set}
The sensitivity in performance of $\pmw$ with respect to the number of rounds hyperparameter on the \adult data sets are given in \cref{fig:adult_hyper_pmw}.
\begin{figure}[H]
    \centering
    \includegraphics[scale=1]{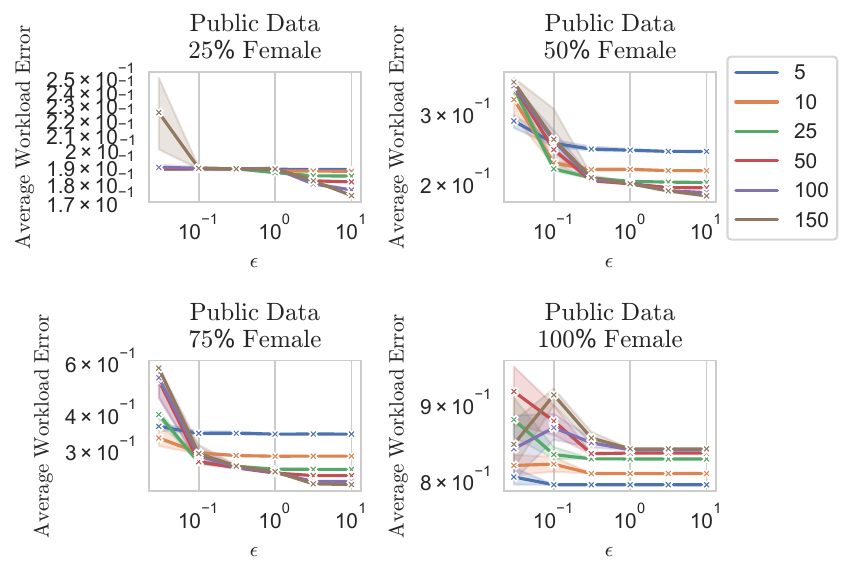}
    \caption{Average workload error for $\pmw$ on the \adult data sets. The colors indicate the setting of the rounds hyperparameter $T$.}
    \label{fig:adult_hyper_pmw}
\end{figure}

\subsubsection{\titanic Data Set}
The sensitivity in performance of $\pmw$ with respect to the number of rounds hyperparameter on the \titanic data set is given in \cref{fig:titanic_hyper_pmw}.
\begin{figure}[H]
    \centering
    \includegraphics[scale=1]{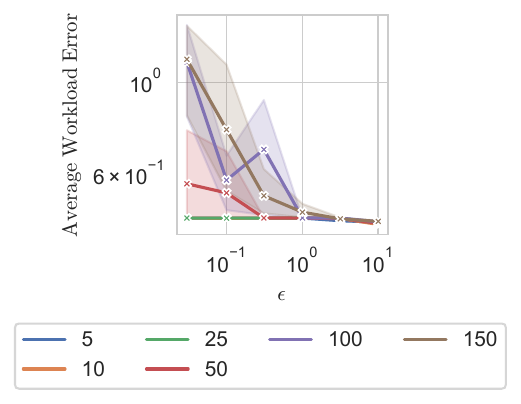}
    \caption{Average workload error for $\pmw$ on the \titanic data set. The colors indicate the setting of the rounds hyperparameter $T$.}
    \label{fig:titanic_hyper_pmw}
\end{figure}

\subsubsection{Large Data Sets}
The sensitivity in performance of $\pmw$ with respect to the number of rounds hyperparameter on the \salary, \fire, and \taxi data sets are given in \cref{fig:multiple_hyper_pmw}.
\begin{figure}[H]
    \centering
    \includegraphics[scale=1]{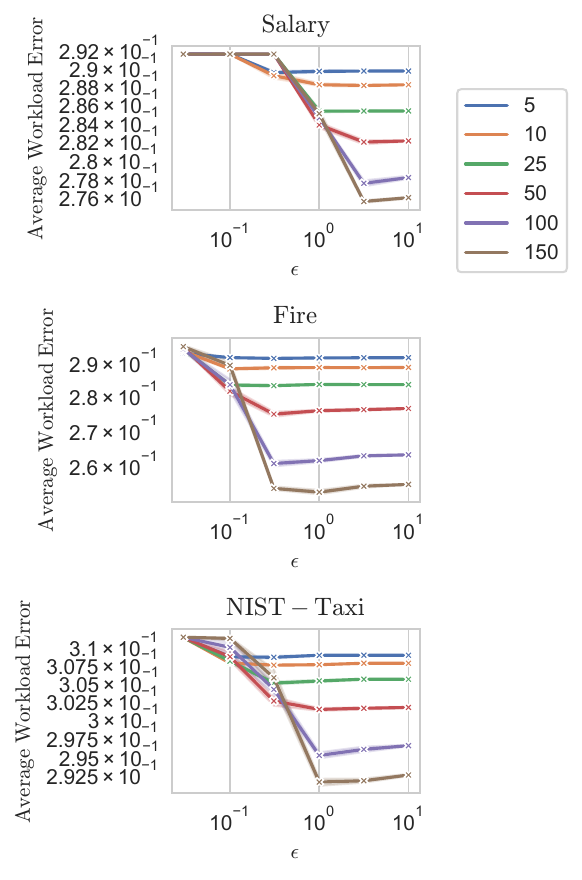}
    \caption{Average workload error for $\pmw$ on the \salary, \fire, and \taxi data sets. The colors indicate the setting of the rounds hyperparameter $T$.}
    \label{fig:multiple_hyper_pmw}
\end{figure}

\subsection{$\ourmethod$ Round Sensitivity}
The performance of $\ourmethod$ is not very sensitive to the number of rounds, it seems that if the number of rounds is too low performance suffers but as long as the number of rounds is high enough the performance does not vary much.

\subsubsection{\adult Data Set}
The sensitivity in performance of $\ourmethod$ with respect to the number of rounds hyperparameter on the \adult data sets are given in \cref{fig:adult_hyper_jam}.
\begin{figure}[H]
    \centering
    \includegraphics[scale=1]{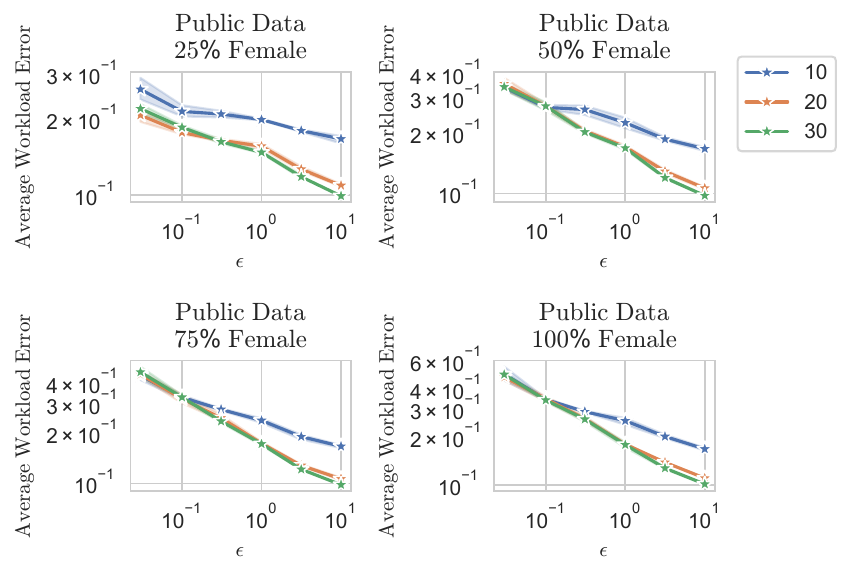}
    \caption{Average workload error for $\ourmethod$ on the \adult data sets. The colors indicate the setting of the rounds hyperparameter $T$.}
    \label{fig:adult_hyper_jam}
\end{figure}

\subsubsection{\titanic Data Set}
The sensitivity in performance of $\ourmethod$ with respect to the number of rounds hyperparameter on the \titanic data set is given in \cref{fig:titanic_hyper_jam}.
\begin{figure}[H]
    \centering
    \includegraphics[scale=1]{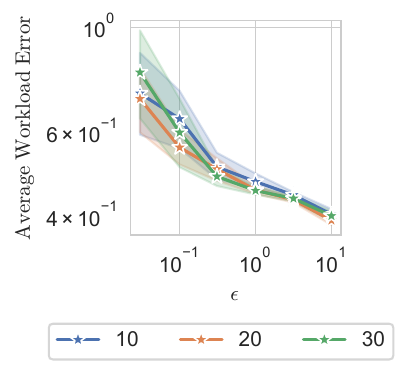}
    \caption{Average workload error for $\ourmethod$ on the \titanic data set. The colors indicate the setting of the rounds hyperparameter $T$.}
    \label{fig:titanic_hyper_jam}
\end{figure}

\subsubsection{Large Data Sets}
The sensitivity in performance of $\ourmethod$ with respect to the number of rounds hyperparameter on the \salary, \fire, and \taxi data sets are given in \cref{fig:multiple_hyper_jam}.
\begin{figure}[H]
    \centering
    \includegraphics[scale=1]{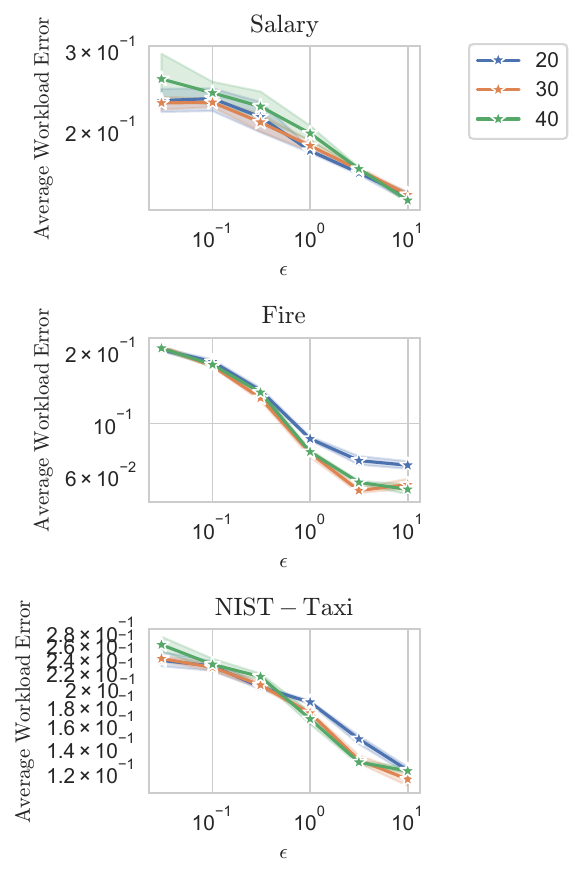}
    \caption{Average workload error for $\ourmethod$ on the \salary, \fire, and \taxi data sets. The colors indicate the setting of the rounds hyperparameter $T$.}
    \label{fig:multiple_hyper_jam}
\end{figure}

\end{document}